\author[1]{Henry Kenlay}
\author[2]{Dorina Thanou}
\author[1]{Xiaowen Dong}
\affil[1]{University of Oxford}
\affil[2]{Swiss Data Science Center}
\newcommand{\A}{\mathbf{A}} 
\newcommand{\B}{\mathbf{B}} 
\newcommand{\C}{\mathbf{C}} 
\newcommand{\E}{\mathbf{E}} 
\newcommand{\I}{\mathbf{I}} 
\newcommand{\D}{\mathbf{D}} 
\newcommand{\U}{\mathbf{U}} 
\newcommand{\R}{\mathbb{R}} 
\newcommand{\G}{\mathcal{G}} 
\newcommand{\V}{\mathcal{V}} 
\newcommand{\GSO}{\mathbf{\Delta}} 
\newcommand{\x}{\mathbf{x}} 
\newcommand{\X}{\mathbf{X}} 
\newcommand{\equaldef}{\overset{\underset{\mathrm{def}}{}}{=}} 
\DeclareMathOperator{\softmax}{softmax}
\DeclareMathOperator{\diag}{diag}
\DeclarePairedDelimiter\norm{\lVert}{\rVert} 
\DeclarePairedDelimiter\abs{\lvert}{\rvert} 
\let\oldabs\abs
\def\abs{\@ifstar{\oldabs}{\oldabs*}}
\let\oldnorm\norm
\def\norm{\@ifstar{\oldnorm}{\oldnorm*}}
\newtheorem{lemm}{Lemma}
\newtheorem{prop}{Proposition}
\newtheorem{corr}{Corollary}
\title{On the Stability of Graph Convolutional Neural Networks under Edge Rewiring}
\begin{document}

\maketitle

\begin{abstract}
Graph neural networks are experiencing a surge of popularity within the machine learning community due to their ability to adapt to non-Euclidean domains and instil inductive biases. Despite this, their stability, i.e., their robustness to small perturbations in the input, is not yet well understood. Although there exists some results showing the stability of graph neural networks, most take the form of an upper bound on the magnitude of change due to a perturbation in the graph topology. However, the change in the graph topology captured in existing bounds tend not to be expressed in terms of structural properties, limiting our understanding of the model robustness properties. In this work, we develop an interpretable upper bound elucidating that graph neural networks are stable to rewiring between high degree nodes. This bound and further research in bounds of similar type provide further understanding of the stability properties of graph neural networks.
\end{abstract}
\begin{IEEEkeywords}
Graph signal processing, graph convolutional neural networks, spectral graph filters, stability.
\end{IEEEkeywords}
\section{Introduction}

Recently, there has been an increasing amount of research on the use of machine learning models which operate on graph-structured data \cite{bronstein2017geometric, wu2020comprehensive}. Graphs encode pairwise interactions and can help model non-Euclidean data such as social networks and molecules as well as impart inductive biases \cite{battaglia2018relational}. Graph signal processing (GSP), for example, generalises traditional signal processing to network domains allowing us to transfer many of the existing tools such as filtering and sampling \cite{emerging, dong2020graph}. GSP also allows us to generalise convolutions providing a framework to design graph convolutional neural networks (GCNNs) where the convolutional layers are filter banks of spectral graph filters \cite{defferrard2016convolutional, levie2018cayleynets, bianchi2019graph}. To date, much of the research has focused on the predictive performance of these models with far fewer studies looking at their theoretical properties.

Like their Euclidean counterparts, GCNNs are susceptible to adversarial attacks \cite{zugner2018adversarial, dai2018adversarial, jin2020adversarial}. An adversarial attack is a small but targeted perturbation of the input which causes large changes in the output \cite{szegedy2013intriguing}. In the case of GCNNs, the modification of a few edges in the input graph can change significantly the learned node representations so that the prediction in the downstream task, typically node or graph classification, switches from a correct to incorrect prediction. One approach to understanding the vulnerability of these models to adversarial attacks is to consider their robustness against perturbation. Robustness is an important property for reliable deployment of machine learning models in the real world, especially in domains where adversaries are common (e.g., on the web). 

Existing literature has shown that large functional classes of spectral graph filters, a key component of GCNNs, are bounded by the magnitude of the change in the input graph. The main drawback of these approaches is that the bounds involved in quantifying stability to specific changes in the topology do not have natural structural interpretations. For example, if we delete just a single edge, it is unclear how loose the bound on the output change will be even if we know the statistical properties about the edge and endpoint nodes.

Our main goal is to extend bounds found in the existing literature so that they have an interpretation in terms of the structural properties of the existing and perturbed graphs. To do this, we focus on polynomial graph filters, and build on our previous work \cite{polynomialstability}, by providing a new bound that is tighter and generalises to the family of normalised augmented adjacency matrices. We then bound the change in normalised augmented adjacency matrix by considering the largest change around each node where the change admits a structural interpretation. As a specific example, we consider perturbations that do not modify the degree distribution of the graph (i.e., double edge rewiring, illustrated in Fig.~\ref{fig:rewiring}). We then discuss under which scenarios, deleting and adding edges between nodes, guarantee the filter to be robust to perturbations. 
To demonstrate how these theoretical results can be combined, we bound the change in the representations learned by two well-known GCNN architectures. Specifically, we consider simple graph convolutional networks \cite{SGCN} and multilayered graph convolutional network \cite{GCN} for scenarios where the graph topology is perturbed using edge rewiring. 

\section{Related work}
One of the earlier works on stability was by Levie et al \cite{levie1}. In this work, the authors give an upper bound of change which grows linearly in the distance between the graph shift operators before and after perturbation for a large class of spectral graph filters. 
The bound is based on analysis in the Cayley smoothness space. Related to this work, $\cite{polynomialstability}$ also provides a linear bound for polynomial spectral filters, where the bound is based on Taylor expansion for matrix functions. In Sec.~\ref{sec:polynomial} we give a tighter bound for polynomial spectral filters via a much simpler proof.

Gama et al. prove that a class of spectral graph filters are stable to changes in the graph topology \cite{gama2019stability}. Furthermore, GCNNs using filters in this class as filter banks and ReLU nonlinearities are also stable. The main difference between this work and the work presented in this paper is on how the magnitude of perturbation is measured. In \cite{gama2019stability} the authors posit that simple additive error as a measure does not reflect the fact that 
the distance between isomorphic graphs can be non-zero. To address this concern they consider a relative measure of perturbation and consider all node permutations for the perturbed graph. We believe that the additive error approach of \cite{ polynomialstability, levie1} and this work, which does not consider node permutations, and the approach of \cite{gama2019stability} are both useful. 
For example, if the graph represents a polygon mesh, then the node labelling in the perturbed graph does not have any meaningful interpretation, as they are just used to construct matrix representations. In this case, considering permutations is appropriate. However, in a social network, the node labelling will typically correspond to the identification of a user, in which case it makes sense to consider the labelling as fixed between the original and perturbed graph.

Beyond the notion of stability adopted in the aforementioned studies, there are efforts to quantify alternative notions of stability for graph-based models. An input to a classification model is certifiably robust if for any perturbation, under a given perturbation model, the predicted label will not change. Methods for generating robustness certificates for nodes in semi-supervised learning tasks have recently been proposed for graph neural networks \cite{NEURIPS2019_e2f374c3, zugner2019certifiable}. 
One can also measure the stability of graph-based models by considering the graph topology and signal as random variables and considering the statistical properties of the model. For example, \cite{isufi2017filtering} proves that in stochastic time-evolving graphs, the output of the filter behaves the same as the deterministic filter in expectation. In \cite{ceci2018robust}, the authors show how to deal with uncertainties in the graph topology to approximate the original filtering process. In \cite{9054424} stochastic graph neural networks are proposed to account for training on a stochastic graph. Unlike the existing approaches outlined in this section, to the best of our knowledge, our work is one of the first to provide sufficient conditions for stability that come with a structural interpretation.

\section{Problem formulation}
We consider unweighted and undirected graphs $\G=(\V, \mathcal{E})$ where $\V$ is a finite set of nodes and $\mathcal{E}$ is the edge set. The adjacency matrix $\A$ encodes connections between nodes with $\A_{uv}=1$ if there is an edge between nodes $u$ and $v$ and zero otherwise. The degree $d_u$ of a node $u$ is the number of nodes that $u$ is connected to. A graph signal is a function $x:\V \rightarrow \mathbb{R}$ that assigns a scalar value to each node. By fixing a labelling of the nodes we can represent this as a vector $\x \in \R^n$ where $\x_i$ is the function value of node $i$ and $n=\abs{\V}$ is the number of nodes in the graph. We will be concerned with the normalised augmented adjacency matrix $\GSO_\gamma = \D_\gamma^{-1/2} \A_\gamma \D_\gamma^{-1/2}$ where $\A_\gamma = \A + \gamma \I$ and $\D_\gamma = \D + \gamma \I$ with $\gamma \geq 0$. When $\gamma=0$ this matrix is the normalised adjacency matrix.  For simplicity, we drop the $\gamma$ subscript when the context is clear or the specific value of $\gamma$ is unimportant. We can write $\GSO$ in terms of its entries as
\begin{equation}
\label{eq:aamentries}
\GSO_{uv} = \begin{cases}
    \frac{\gamma}{d_u + \gamma}  &\textnormal{if }u = v \\
    \frac{1}{\sqrt{(d_u + \gamma)(d_v + \gamma)}}  &\textnormal{if }\A_{uv}=1 \textnormal{ and } u \not = v \\
    0  &\textnormal{otherwise} \\
    \end{cases}.
\end{equation}

Normalised augmented adjacency matrices admit an eigendecomposition $\GSO=\U \mathbf{\Lambda} \U^\intercal$ where the columns of $\U$ are the orthonormal eigenvectors, and $\mathbf{\Lambda} = \diag(\lambda_1, \ldots \lambda_n)$ where $\lambda_1 \leq, \ldots, \leq \lambda_n$ is the diagonal matrix of eigenvalues. The graph Fourier transform of a signal $\x$ is given by $\hat \x = \U^\intercal \x$, with the inverse graph Fourier transform defined as $\x = \U \hat \x$. With a notion of Fourier basis,  a spectral graph filter is defined as a function $g:\R \rightarrow \R$ which amplifies and attenuates specific frequencies. We can filter a signal by applying the filter to the graph shift operator (in this case $\GSO$) directly $$\U \diag(g(\lambda_1), \ldots g(\lambda_n))\U^\intercal \x = \U g(\mathbf{\Lambda})\U^\intercal \x = g(\GSO)\x.$$

In this work $\norm*{\cdot}_2$ represents the Euclidean norm when applied to vectors and the operator norm when applied to matrices. We will also consider the Frobenius norm $\norm*{\A}_F^2=\sum_{i,j}\A_{i,j}^2$, the matrix one norm $\norm*{\A}_1=\max_{i}\sum_{j}\abs{\A_{ij}}$ and the matrix infinity norm $\norm*{\A}_\infty=\max_{j}\sum_{i}\abs{\A_{ij}}$. The eigenvalues of the normalised augmented adjacency matrix lie in the interval $[-1, 1]$. Furthermore, $1$ is always an eigenvalue so $\norm*{\GSO}_2=1$.

Given a graph $\G$ and a perturbed graph $\G_p$ with normalised augmented adjacency matrices $\GSO$ and $\GSO_p$ respectively, our goal is twofold. First, we want to understand the stability of spectral graph filters in terms of structural perturbation, by providing bounds that quantify the change in the output. Second, we want to find sufficient conditions for this change to be small, by using interpretable structural properties of the graphs and the perturbation. We address the first goal in Sec.~\ref{sec:polynomial} and the second in Sec.~\ref{sec:rewiring}. Furthermore, we demonstrate how these theoretical contributions can be combined to give insight into the stability of GCNNs. In Sec.~\ref{sec:GNN}, we combine the results of Sec.~\ref{sec:polynomial} and Sec.~\ref{sec:rewiring} to show how we can provide sufficient conditions for the stability of two popular GCNN architectures.

\section{Stability of polynomial filters}
\label{sec:polynomial}
Our notion of stability is based on relative output distance which is bounded by the filter distance
\begin{equation*}
\frac{\norm{g(\GSO)\x - g(\GSO_p)\x}_2}{\norm{\x}_2} \leq \max_{\x\not = 0} \frac{\norm{g(\GSO)\x - g(\GSO_p)\x}_2}{\norm{\x}_2} \equaldef \norm{g(\GSO) - g(\GSO_p)}_2. 
\end{equation*}
In \cite{polynomialstability} we bounded the filter distance of polynomial filters by some constant times the error $\norm*{\E}_2$, where the constant depends on the filter and the error is the magnitude of the difference between normalised Laplacian matrices. We say that by satisfying this condition the filters are stable. Although the focus of this paper is not on tight bounds, we improve this bound by finding a smaller constant for polynomial filters. To do so we will use the following Lemma.
\begin{lemm}[Lemma 3, \cite{levie1}]
\label{lemma:levie}
Suppose $\mathbf{B}, \mathbf{D}, \mathbf{E} \in \mathbb{C}^{N \times N}$ are Hermitian matrices satisfying $\mathbf{B}=\mathbf{D}+\mathbf{E}$, and $\norm{\mathbf{B}}_2, \norm{\mathbf{D}}_2 \leq C$ for some $C > 0$. Then for every $l \geq 0$ 
\begin{equation*}
    \norm{\mathbf{B}^l-\mathbf{D}^l}_2 \leq lC^{l-1}\norm{\E}_2.
\end{equation*}
\end{lemm}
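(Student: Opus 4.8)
The plan is to reduce everything to a single telescoping decomposition of $\B^l-\D^l$ followed by submultiplicativity of the operator norm. First I would write the difference of $l$-th powers as a sum in which each term isolates exactly one factor of $\E=\B-\D$:
\begin{equation*}
\B^l-\D^l = \sum_{k=0}^{l-1}\B^k\,\E\,\D^{l-1-k}.
\end{equation*}
This is the non-commutative analogue of the scalar identity $a^l-b^l=\sum_{k}a^k(a-b)b^{l-1-k}$, and I would justify it by substituting $\E=\B-\D$ and observing that the resulting sum $\sum_{k}\bigl(\B^{k+1}\D^{l-1-k}-\B^{k}\D^{l-k}\bigr)$ telescopes: the positive part of the $(k-1)$-th term cancels the negative part of the $k$-th term, leaving only $\B^l\D^0-\B^0\D^l=\B^l-\D^l$. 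The base case $l=0$ is the empty sum, consistent with $\B^0-\D^0=\I-\I=\mathbf{0}$, so the identity holds for all $l\geq 0$.

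Next I would take operator norms of both sides, apply the triangle inequality across the sum, and then use submultiplicativity $\norm{\mathbf{X}\mathbf{Y}}_2\leq\norm{\mathbf{X}}_2\norm{\mathbf{Y}}_2$ together with the power bound $\norm{\B^k}_2\leq\norm{\B}_2^{\,k}$ (and likewise for $\D$) to get
\begin{equation*}
\norm{\B^l-\D^l}_2 \leq \sum_{k=0}^{l-1}\norm{\B}_2^{\,k}\,\norm{\E}_2\,\norm{\D}_2^{\,l-1-k}.
\end{equation*}
Invoking the hypothesis $\norm{\B}_2,\norm{\D}_2\leq C$, each summand is bounded by $C^{k}\norm{\E}_2\,C^{l-1-k}=C^{l-1}\norm{\E}_2$, and since the exponents of $C$ always recombine to $l-1$ independently of $k$, every term is identical. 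There being exactly $l$ terms, the sum is at most $lC^{l-1}\norm{\E}_2$, which is the claimed inequality.

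There is no serious obstacle here: the estimate is elementary once the telescoping identity is in place, and the only ingredients beyond it are the triangle inequality and submultiplicativity, both valid for any submultiplicative matrix norm. The one point that warrants care is the exponent bookkeeping that makes each summand equal to $C^{l-1}\norm{\E}_2$ regardless of $k$; this is precisely what produces the clean factor of $l$. I note that the Hermitian assumption is not strictly needed for this particular bound—submultiplicativity alone suffices—though it is natural in the intended application, where $\B$ and $\D$ are graph shift operators. As a fallback I would keep a direct induction on $l$ using $\B^{l+1}-\D^{l+1}=\B(\B^l-\D^l)+\E\D^l$, which yields the recurrence $a_{l+1}\leq C\,a_l+C^{l}\norm{\E}_2$ for $a_l=\norm{\B^l-\D^l}_2$ and solves to the same bound.
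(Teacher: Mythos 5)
Your proof is correct. The paper does not prove this lemma itself --- it imports it verbatim from the cited reference \cite{levie1} --- so there is no in-paper argument to compare against; your telescoping decomposition $\B^l-\D^l=\sum_{k=0}^{l-1}\B^k\,\E\,\D^{l-1-k}$ followed by the triangle inequality and submultiplicativity is the standard proof of this bound and matches the argument in that reference. Your side remark is also accurate: the Hermitian hypothesis is not needed for this inequality, since submultiplicativity of the operator norm holds for arbitrary matrices; it is only relevant to the context in which the lemma is applied, where $\B$ and $\D$ are graph shift operators.
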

From here on in we will write $\E=\mathbf{\GSO_p}-\mathbf{\GSO}$ to be the difference of normalised augmented adjacency matrices between a graph $\G$ and a perturbed version of the graph $\G_p$. The smaller constant is given by the following proposition.
\begin{prop}
\label{prop:polystab}
Let $\GSO$ and $\GSO_p$ be the normalised augmented adjacency matrix for $\G$ and $\G_p$. Consider a polynomial graph filter $g_\theta(\lambda) = \sum_{k=0}^K \theta_k \lambda^k$. Then
\begin{equation*}
    \norm{g_\theta(\GSO) - g_\theta(\GSO_p)}_2 \leq \sum_{k=1}^K k\abs{\theta_k} \norm{\E}_2.
\end{equation*}
\end{prop}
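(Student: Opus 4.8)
The plan is to expand the filter difference polynomially, discard the constant term, apply the triangle inequality, and then invoke Lemma~\ref{lemma:levie} term by term with $C=1$. First I would write, by linearity of the matrix polynomial, $g_\theta(\GSO) - g_\theta(\GSO_p) = \sum_{k=0}^K \theta_k (\GSO^k - \GSO_p^k)$. The $k=0$ term contributes $\theta_0(\I - \I) = \mathbf{0}$, so the effective sum starts at $k=1$. Applying the triangle inequality together with absolute homogeneity of the operator norm then yields $\norm{g_\theta(\GSO) - g_\theta(\GSO_p)}_2 \leq \sum_{k=1}^K \abs{\theta_k}\,\norm{\GSO^k - \GSO_p^k}_2$.

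Next I would bound each summand using Lemma~\ref{lemma:levie}. Here I set $\mathbf{B} = \GSO_p$, $\mathbf{D} = \GSO$, and $\E = \GSO_p - \GSO$, so that the required relation $\mathbf{B} = \mathbf{D} + \E$ holds by the definition of $\E$. The hypotheses of the Lemma need checking: both $\GSO$ and $\GSO_p$ are real symmetric (hence Hermitian), and as recalled in the problem formulation their eigenvalues lie in $[-1,1]$ with $\norm{\GSO}_2 = \norm{\GSO_p}_2 = 1$, so we may take the uniform bound $C = 1$. The Lemma then gives $\norm{\GSO^k - \GSO_p^k}_2 \leq k\,C^{k-1}\,\norm{\E}_2 = k\,\norm{\E}_2$, since $C^{k-1}=1$.

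Substituting this bound back into the sum produces $\norm{g_\theta(\GSO) - g_\theta(\GSO_p)}_2 \leq \sum_{k=1}^K k\abs{\theta_k}\,\norm{\E}_2$, which is exactly the claimed inequality. The step demanding the most care is verifying the Lemma's hypotheses—specifically confirming that $C=1$ is a valid simultaneous norm bound for both $\GSO$ and $\GSO_p$—but this is immediate from the spectral properties of normalised augmented adjacency matrices stated earlier, so no genuine obstacle arises. The remainder is a routine triangle-inequality estimate, which is precisely why this argument is simpler than the Taylor-expansion approach of \cite{polynomialstability}.
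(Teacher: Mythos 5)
Your proof is correct and follows essentially the same route as the paper's: expand the polynomial difference, drop the $k=0$ term, apply the triangle inequality, and invoke Lemma~\ref{lemma:levie} with $C=1$. The only difference is that you explicitly verify the lemma's hypotheses (Hermitian matrices, $\norm{\GSO}_2 = \norm{\GSO_p}_2 = 1$), which the paper leaves implicit.
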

\begin{proof}
Using the triangle inequality followed by an application of Lemma \ref{lemma:levie} with constant $C=1$ we get 
\begin{equation*}
\norm{g_\theta(\GSO) - g_\theta(\GSO_p)}_2 = \norm{\sum_{k=1}^K \theta_k (\GSO^k - \GSO_p^k)}_2 \leq \sum_{k=1}^K \abs{\theta_k} \norm{\GSO^k - \GSO_p^k}_2 \leq \sum_{k=1}^K k \abs{\theta_k} \norm{\E}_2. \qedhere
\end{equation*}
\end{proof}
In this work, we assume that the parameters of the model are fixed before and after perturbation. In the adversarial learning literature, an attack that modifies the input to cause large changes in the output whilst the model parameters are fixed is known as an evasion attack \cite{jin2020adversarial}. Robust models in the context of our work are those that are robust to evasion attacks with respect to the graph structure.
\section{Robustness to edge rewiring perturbations} 
\label{sec:rewiring}

In this section we bound the error term $\norm*{\E}_2$ by interpretable properties relating to the structural change. Consider a graph $\G$ which we perturb to arrive at $\G_p$. Our approach to upper bounding $\norm*{\E}_2$ relies on the inequality $\norm*{\E}_2^2 \leq \norm*{\E}_1 \norm*{\E}_\infty$ \cite[Section 6.3]{higham2002accuracy}. As $\E$ is symmetric $\norm*{\E}_1 = \norm*{\E}_\infty$ giving $\norm*{\E}_2 \leq \norm*{\E}_1$. There may exist strategies which give tighter bounds, but the benefit of this approach to bounding the error term is that $\norm*{\E}_1$ leads to an interpretation in the structural domain. For a matrix $\E$ we write $\E_u$ as the $u$th column of $\E$ so $\E_u^\intercal$ is the $u$th row. The row $\E_u^\intercal$  corresponds to the node $u$ in the graph. By definition $\norm*{\E}_1 = \max_{u \in \V} \norm*{\E_u^\intercal}_1$ where $\norm*{\E_u^\intercal}_1=\sum_v \abs{\E_{uv}}$ is the Manhattan norm of the row. Perturbations which cause small changes to  $\norm*{\E_u^\intercal}_1$ over all nodes $u$ guarantee small change in terms of $\norm*{\E}_2$. 

The focus of this section is on developing an interpretable upper bound on $\norm{\E}_2$. Before establishing an upper bound, we briefly mention the following lower bound to the error term \begin{equation*}
\max_{i,j} \abs{\E_{ij}} \equaldef \norm{\E}_{\max{}} \leq \norm{\E}_2.
\end{equation*}
This lower bound gives us sufficient conditions for large values of the error term. For example, deleting or adding an edge such that $\sqrt{(d_u+\gamma)(d_v+\gamma)}$ is small (e.g., if both degrees are small) will cause $\norm*{\E}_2$ to be large. In these cases, our bound will be loose and cannot provide guarantees about the filter robustness.

We aim to understand what type of perturbations spectral graph filters may be robust to. As a specific example consider perturbations that preserve the degree of the nodes. For this scenario we can consider how the entries change from $\GSO$ to $\GSO_p$ by considering Eq.~(\ref{eq:aamentries}) to write a closed-form for $\norm*{\E_u^\intercal}_1$. We will write $\mathcal{A}_u$ to be the set of edges added around a node $u$ and $\mathcal{D}_u$ to be the set of edges deleted around a node $u$. The diagonal entries of $\GSO$ and $\GSO_p$ remain unchanged, and each edge deletion flips the entry from zero to the second case of Eq.~(\ref{eq:aamentries}), whilst edge addition flips the entry the other way. This insight lets us write $\norm{\E_u^\intercal}_1$ in closed-form as 
\begin{align}
\label{eq:Eu}
\norm{\E_u^\intercal}_1 &=  \sum_{v \in \mathcal{D}_u} \frac{1}{\sqrt{(d_u+\gamma)(d_v+\gamma)}} + \sum_{v \in \mathcal{A}_u} \frac{1}{\sqrt{(d_u+\gamma)(d_v+\gamma)}} \nonumber \\
&= \frac{1}{\sqrt{d_u+\gamma}} \left( \sum_{v \in \mathcal{D}_u} \frac{1}{\sqrt{d_v+\gamma}} + \sum_{v \in \mathcal{A}_u} \frac{1}{\sqrt{d_v+\gamma}} \right)
\end{align}

One such perturbation is edge rewiring that preserves degree distribution. We define double edge rewiring as a function of two edges $(u, v)$ and $(u', v')$ such that $u$ is not connected to $u'$ or $v'$ and similarly $v$ is not connected to $u'$ and $v'$. The operation consists of deleting edges $(u, v)$ and $(u', v')$ and adding edges $(u, u')$ and $(v, v')$. This operation is depicted graphically in Fig.~\ref{fig:rewiring}.  Although the precise definition of rewiring in this work is slightly different, the idea of rewiring has been proposed as a strategy to make modifications imperceptible in the context of topological adversarial attacks \cite{ma2019attacking}. Approximately preserving the degree distribution has also been a criterion used to define imperceptibility \cite{zugner2018adversarial}. Beyond the adversarial attack literature, double edge rewiring has been used to model changes in a network where the capacity of a node is fixed and remains at full load such as in communication networks \cite{bienstock1994degree}.   
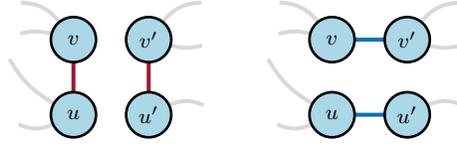
\begin{figure}
    \centering
    \begin{subfigure}{0.2\textwidth}
        \centering
        \begin{tikzpicture}
\Vertex[x=0,y=0, label=$u$]{u}
\Vertex[x=0,y=1, label=$v$]{v}
\Vertex[x=1,y=0, label=$u'$]{uprime}
\Vertex[x=1,y=1, label=$v'$]{vprime}
\Edge[RGB,color={162.56 ,  19.968,  47.104}](u)(v)
\Edge[RGB,color={162.56 ,  19.968,  47.104}](uprime)(vprime)
\Vertex[x=-1,y=1.5, Pseudo]{dummy13}
\Vertex[x=-1,y=1, Pseudo]{dummy12}
\Vertex[x=-1,y=0, Pseudo]{dummy11}
\Edge[bend=25, opacity=0.2](u)(dummy11)
\Edge[bend=15, opacity=0.2](u)(dummy12)
\Edge[bend=-15, opacity=0.2](v)(dummy12)
\Edge[bend=-25, opacity=0.2](v)(dummy13)
\Vertex[x=2,y=1.5, Pseudo]{dummy23}
\Vertex[x=2,y=1, Pseudo]{dummy22}
\Vertex[x=2,y=0, Pseudo]{dummy21}
\Edge[bend=25, opacity=0.2](uprime)(dummy21)
\Edge[bend=-15, opacity=0.2](vprime)(dummy22)
\Edge[bend=25, opacity=0.2](vprime)(dummy23)
\end{tikzpicture}
        \caption{Before rewiring}
    \end{subfigure}
    \begin{subfigure}{0.2\textwidth}
        \centering
        \begin{tikzpicture}
\Vertex[x=0,y=0, label=$u$]{u}
\Vertex[x=0,y=1, label=$v$]{v}
\Vertex[x=1,y=0, label=$u'$]{uprime}
\Vertex[x=1,y=1, label=$v'$]{vprime}
\Edge[RGB,color={ 0.   , 114.432, 189.696}](u)(uprime)
\Edge[RGB,color={ 0.   , 114.432, 189.696}](v)(vprime)
\Vertex[x=-1,y=1.5, Pseudo]{dummy13}
\Vertex[x=-1,y=1, Pseudo]{dummy12}
\Vertex[x=-1,y=0, Pseudo]{dummy11}
\Edge[bend=25, opacity=0.2](u)(dummy11)
\Edge[bend=15, opacity=0.2](u)(dummy12)
\Edge[bend=-15, opacity=0.2](v)(dummy12)
\Edge[bend=-25, opacity=0.2](v)(dummy13)
\Vertex[x=2,y=1.5, Pseudo]{dummy23}
\Vertex[x=2,y=1, Pseudo]{dummy22}
\Vertex[x=2,y=0, Pseudo]{dummy21}
\Edge[bend=25, opacity=0.2](uprime)(dummy21)
\Edge[bend=-15, opacity=0.2](vprime)(dummy22)
\Edge[bend=25, opacity=0.2](vprime)(dummy23)
\end{tikzpicture}
        \caption{After rewiring}
    \end{subfigure}
    \vspace{-0.2cm}
    \caption{In the rewiring operation the red edges are deleted and the blue edges are added. The degree of each node remains the same.}
    \label{fig:rewiring}
\end{figure}

We will write $R_u$ to be the number of rewiring operations involving $u$ and write $\delta_u$ to be the smallest degree amongst either the nodes $u$ disconnects with or is now connected to. Each rewiring causes a single edge deletion and edge addition for each node involved so that the number of terms in each sum, the cardinality of sets $\mathcal{D}_u$ and $\mathcal{A}_u$, is $R_u$. Using this we can bound Eq.~(\ref{eq:Eu}) to get that
\begin{equation*}
\norm{\E_u^\intercal}_1 \leq \frac{1}{\sqrt{d_u+\gamma}} \left( \sum_{v \in \mathcal{D}_u} \frac{1}{\sqrt{\delta_u+\gamma}} + \sum_{v \in \mathcal{A}_u} \frac{1}{\sqrt{\delta_u+\gamma}} \right) = \frac{2R_u}{\sqrt{(d_u+\gamma)(\delta_u+\gamma)}}.
\end{equation*}

The largest possible value for the right hand side of the above equation over all nodes $u$ provides an upper bound for $\norm*{\E}_2$. From this, we can draw some conclusions as to when the filter will be robust to rewiring operations. The first is that one should not rewire around one node significantly as this will increase $R_u$. To keep $\norm*{\E}_1$ small, and thus $\norm*{\E}_2$ small, we must keep $\norm*{\E_u^\intercal}_1$ small for all nodes $u$ suggesting the perturbation should be distributed across the graph. Related to this observation, in \cite{polynomialstability} it was numerically demonstrated that the locality of perturbations can play a role in the magnitude of the error. The second strategy to ensure robustness is to rewire between high degree nodes. This will cause $d_u$ to be large and therefore $\norm*{\E_u^\intercal}_1$ will be small. Finally, using a normalised augmented adjacency matrix with larger values of $\gamma$ can cause smaller changes. 

\section{Stability of GCNN Models}
\label{sec:GNN}
We make use of results in previous sections to analyse the stability of two popular GCNN models, i.e., the simple graph convolutional networks (SGCN) \cite{SGCN} and the multilayered graph convolutional network (GCN) \cite{GCN}.

\subsection{SGCN}
The SGCN model is motivated by considering a multilayered GCN with the activation functions removed. By removing the activation functions the model boils down to a fixed monomial filter of order $K$ followed by applying a fully connected layer and a softmax layer to the node features.

Let the input data be given by the matrix $\X \in \R^{n \times d}$ where $d$ is the dimension of the features associated with each node. We can consider $\X$ as $d$ stacked graph signals which we will call feature maps. We will assume that each column (feature map) of $\X$ has unit norm. The output is a matrix $\mathbf{Y} \in \R^{n \times c}$,  representing class probabilities for each node.  The SGCN model is defined as $$\mathbf{Y} = \softmax(\tilde \GSO^K \X \mathbf{\Theta})$$
where $\softmax(\x)_i = \exp(\x_i)/\sum_i \exp(\x_i)$ normalises the rows to be probability distributions. We write $\tilde \GSO \equaldef \GSO_1$ as the normalised augmented adjacency matrix with $\gamma=1$. In this subsection, we will analyse how the logits, the node representations before the softmax is applied, change. The softmax function has a Lipschitz constant of $1$ so any bound on the logits can trivially be applied to the model outputs \cite{gao2017properties}[Proposition 4]. We begin by stating the following Lemma.
\begin{lemm}
\label{lemma:normineq}
Let $\mathbf{B} \in \R^{m \times r}$ and $\mathbf{C} \in \R^{r \times n}$ then 
\begin{equation*}
    \norm{\mathbf{BC}}_F \leq \norm{\mathbf{B}}_F \norm{\mathbf{C}}_2.
\end{equation*}
\end{lemm}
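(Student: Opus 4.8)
The plan is to decompose the Frobenius norm row by row, reducing the matrix product to a collection of matrix--vector products each controlled by the operator norm of $\mathbf{C}$. The starting observation is that the squared Frobenius norm of any matrix equals the sum of the squared Euclidean norms of its rows. Writing $\mathbf{b}_i^\intercal$ for the $i$th row of $\mathbf{B}$, the $i$th row of the product $\mathbf{BC}$ is exactly $\mathbf{b}_i^\intercal \mathbf{C}$, so I would begin by writing
$$\norm{\mathbf{BC}}_F^2 = \sum_{i=1}^m \norm{\mathbf{b}_i^\intercal \mathbf{C}}_2^2 = \sum_{i=1}^m \norm{\mathbf{C}^\intercal \mathbf{b}_i}_2^2,$$
where the last equality merely transposes each row covector into a column vector.

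The second step is to bound each summand using the defining property of the operator norm: for any vector $\mathbf{v}$ one has $\norm{\mathbf{C}^\intercal \mathbf{v}}_2 \leq \norm{\mathbf{C}^\intercal}_2 \norm{\mathbf{v}}_2$. Here I would invoke the invariance of the operator norm under transposition, $\norm{\mathbf{C}^\intercal}_2 = \norm{\mathbf{C}}_2$ (both equal the largest singular value of $\mathbf{C}$). Applying this with $\mathbf{v} = \mathbf{b}_i$ gives the termwise estimate $\norm{\mathbf{C}^\intercal \mathbf{b}_i}_2^2 \leq \norm{\mathbf{C}}_2^2 \norm{\mathbf{b}_i}_2^2$.

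Finally I would substitute back, pull the common factor $\norm{\mathbf{C}}_2^2$ out of the sum, and recognise that the remaining sum $\sum_i \norm{\mathbf{b}_i}_2^2$ is precisely $\norm{\mathbf{B}}_F^2$, yielding
$$\norm{\mathbf{BC}}_F^2 \leq \norm{\mathbf{C}}_2^2 \sum_{i=1}^m \norm{\mathbf{b}_i}_2^2 = \norm{\mathbf{C}}_2^2 \norm{\mathbf{B}}_F^2,$$
after which taking square roots gives the claim. There is essentially no serious obstacle here, since every inequality used is standard; the only point demanding care is the \emph{orientation} of the decomposition. One must split $\mathbf{B}$ into rows rather than $\mathbf{C}$ into columns, because the column-wise alternative would instead produce the companion bound $\norm{\mathbf{BC}}_F \leq \norm{\mathbf{B}}_2 \norm{\mathbf{C}}_F$, which pairs the norms the wrong way and is not the inequality being asserted.
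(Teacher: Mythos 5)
Your proposal is correct and follows essentially the same route as the paper's proof: decomposing $\mathbf{BC}$ into the rows of $\mathbf{B}$, bounding each row product $\mathbf{b}_i^\intercal\mathbf{C}$ via the operator norm of $\mathbf{C}^\intercal$, and summing to recover $\norm{\mathbf{B}}_F^2\norm{\mathbf{C}}_2^2$. No meaningful difference in approach.
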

\begin{proof}
By decomposing $\B \C$ in terms of the rows $\mathbf{b}_k^\intercal$ of $\B$ we get 
\begin{equation}
\label{eq:lemma2eq}
    \norm{\B \C}^2_F = \sum\limits_{k=1}^m \norm{\mathbf{b}_k^\intercal \C}_2^2 \leq \sum\limits_{k=1}^m \norm{\mathbf{b}_k^\intercal}_2^2 \norm{\C}^2_2 = \norm{\B}_F^2 \norm{\C}_2^2.
\end{equation}
The inequality follows from the observation that $$\norm*{\mathbf{b}^\intercal \mathbf{C}}_2 = \norm*{(\mathbf{b}^\intercal \mathbf{C})^\intercal}_2 =\norm*{\mathbf{C}^\intercal\mathbf{b}}_2 \leq \norm*{\mathbf{C}^\intercal}_2\norm*{\mathbf{b}}_2=\norm*{\mathbf{b}}_2\norm*{\mathbf{C}}_2.$$ Taking the square root of both sides of Eq.~(\ref{eq:lemma2eq}) gives the result.
\end{proof}
Using this we can provide a bound on how much the logits can change under the Frobenius norm. To motivate why we are interested in the Frobenius norm here, consider taking the Euclidean norm of each node output to measure the amount of change in node representation. Taking the mean squared error of these distances amounts to taking the Frobenius norm of the logits matrix.
\begin{prop}
\label{prop:sgcnembed}
The distance of the logits is bounded like so
\begin{equation*}
     \norm{\tilde \GSO^K \X \mathbf{\Theta} - \tilde \GSO^K_p \X \mathbf{\Theta}}_F \leq \sqrt{d}K\norm{\E}_2\norm{\mathbf{\Theta}}_2 
\end{equation*}
\end{prop}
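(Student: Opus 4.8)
The plan is to factor out the common $\X\mathbf{\Theta}$ and then strip off the two matrix factors one at a time using the submultiplicativity of Lemma~\ref{lemma:normineq}, reducing everything to the power difference $\tilde\GSO^K - \tilde\GSO_p^K$, which Lemma~\ref{lemma:levie} then converts into $\norm{\E}_2$. First I would write the target as $\norm{(\tilde\GSO^K - \tilde\GSO_p^K)\X\mathbf{\Theta}}_F$ and apply Lemma~\ref{lemma:normineq} with $\B = (\tilde\GSO^K - \tilde\GSO_p^K)\X$ and $\C = \mathbf{\Theta}$, which immediately pulls out $\norm{\mathbf{\Theta}}_2$ and leaves the factor $\norm{(\tilde\GSO^K - \tilde\GSO_p^K)\X}_F$ to handle.

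The second application is where the only real care is needed. I want the \emph{operator} norm on the GSO-power difference, since that is the object Lemma~\ref{lemma:levie} controls, but Lemma~\ref{lemma:normineq} as stated would instead hand me its Frobenius norm. The fix is to transpose before applying it: $\norm{(\tilde\GSO^K - \tilde\GSO_p^K)\X}_F = \norm{\X^\intercal(\tilde\GSO^K - \tilde\GSO_p^K)^\intercal}_F$, and now Lemma~\ref{lemma:normineq} (with the $\X^\intercal$ factor playing the Frobenius role) yields $\norm{\X}_F\,\norm{\tilde\GSO^K - \tilde\GSO_p^K}_2$, using that both the Frobenius norm and the operator norm are invariant under transposition. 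Spotting this transpose is, in my view, the main obstacle; without it one is led to $\norm{\tilde\GSO^K - \tilde\GSO_p^K}_F$, for which no analogue of Lemma~\ref{lemma:levie} is available here.

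It then remains to evaluate the three factors, which is routine. The unit-norm assumption on the $d$ columns of $\X$ gives $\norm{\X}_F = \sqrt{d}$, and Lemma~\ref{lemma:levie} applied with $C = 1$ (legitimate since $\norm{\tilde\GSO}_2 = \norm{\tilde\GSO_p}_2 = 1$ for normalised augmented adjacency matrices) and $l = K$ gives $\norm{\tilde\GSO^K - \tilde\GSO_p^K}_2 \leq K\norm{\E}_2$. Chaining the three estimates together produces $\sqrt{d}\,K\,\norm{\E}_2\,\norm{\mathbf{\Theta}}_2$, as claimed.
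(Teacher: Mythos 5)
Your proof is correct and follows essentially the same route as the paper: two applications of Lemma~\ref{lemma:normineq} (the second via the transpose, a detail the paper glosses over but which you rightly identify as necessary to land the operator norm on the power difference rather than the Frobenius norm), followed by the bound $\norm{\tilde\GSO^K - \tilde\GSO_p^K}_2 \leq K\norm{\E}_2$. The only cosmetic difference is that you invoke Lemma~\ref{lemma:levie} directly with $l=K$, $C=1$, whereas the paper cites Proposition~\ref{prop:polystab}, which for the monomial $\lambda^K$ is the same statement.
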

\begin{proof}
We first note that $\norm*{\X}_F = \sqrt{d}$. Using this, two applications of Lemma \ref{lemma:normineq}, and an application of Proposition \ref{prop:polystab} we get 
\begin{align*}
    \norm{\tilde \GSO^K \X \mathbf{\Theta} - \tilde \GSO^K_p \X \mathbf{\Theta}}_F &\leq \norm{\tilde \GSO^K \X - \tilde \GSO^K_p \X }_F \norm{\mathbf{\Theta}}_2 \\
    &\leq \norm{\tilde \GSO^K - \tilde \GSO^K_p}_2 \norm{\X}_F \norm{\mathbf{\Theta}}_2. \\
    &\leq \sqrt{d} K \norm{\E}_2 \norm{\mathbf{\Theta}}_2.  \qedhere
\end{align*}
\end{proof}

\subsection{Multilayer GCN}
We now consider a multilayered GCN model.
We again consider the logits of a model which this time consists of multiple GCN layers with pointwise non-linearities giving the $l$th layer representation as $$\X^{(l)} = \sigma(\tilde \GSO \X^{(l-1)} \mathbf{\Theta}^{(l)}),$$ where $\sigma$ is the ReLU activation function and $\mathbf{\Theta}^{(l)}$ are the layer parameters. We will consider the number of feature maps to be the same throughout the model.
\begin{prop}
\label{prop:gcnembed}
Let $\X^{(l)} = \sigma(\tilde \GSO \X^{(l-1)} \mathbf{\Theta}^{(l)})$ where $\X^{(0)}\in \R^{n \times d}$ is the input feature maps, $\mathbf{\Theta}^{(l)} \in \R^{d \times d}$ are the weight matrices and $L$ is the number of layers so $\X^{L}$ is the output features. Then 
\begin{equation*}
    \norm{\X^{(L)} - \X_p^{(L)}}_F \leq \sqrt{d}L\norm{\E}_2\prod_{l=1}^L \norm{\mathbf{\Theta}^{(l)}}_2. 
\end{equation*}
\end{prop}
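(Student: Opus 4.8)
The plan is to argue by induction on the number of layers $L$, tracking the layer-wise difference $\X^{(l)} - \X_p^{(l)}$ between the clean and perturbed forward passes. The two ingredients I will lean on are that the ReLU activation is nonexpansive, i.e. it is $1$-Lipschitz and satisfies $\sigma(\mathbf{0})=\mathbf{0}$, so that $\norm{\sigma(\A)-\sigma(\B)}_F \leq \norm{\A - \B}_F$ and $\norm{\sigma(\A)}_F \leq \norm{\A}_F$; together with both orientations of Lemma~\ref{lemma:normineq}: the stated form $\norm{\B\C}_F \leq \norm{\B}_F\norm{\C}_2$ and, by transposing, the form $\norm{\B\C}_F \leq \norm{\B}_2\norm{\C}_F$. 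I will also use $\norm{\tilde\GSO}_2 = \norm{\tilde\GSO_p}_2 = 1$ and $\norm{\tilde\GSO - \tilde\GSO_p}_2 = \norm{\E}_2$.

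First I would establish an auxiliary forward bound on the perturbed activations, namely $\norm{\X_p^{(l)}}_F \leq \sqrt{d}\prod_{j=1}^{l}\norm{\mathbf{\Theta}^{(j)}}_2$. This follows by a short induction: apply nonexpansiveness of $\sigma$, then Lemma~\ref{lemma:normineq} to peel off $\mathbf{\Theta}^{(l)}$ on the right, then the transposed form to peel off $\tilde\GSO_p$ (whose operator norm is $1$), and finally invoke the base case $\norm{\X_p^{(0)}}_F = \sqrt{d}$, which uses the standing assumption that each feature map has unit norm.

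The core step is a recursion for the difference. Writing $\X^{(l)} - \X_p^{(l)} = \sigma(\tilde\GSO\X^{(l-1)}\mathbf{\Theta}^{(l)}) - \sigma(\tilde\GSO_p\X_p^{(l-1)}\mathbf{\Theta}^{(l)})$ and applying the Lipschitz property of $\sigma$ removes the activation. I would then pull out $\mathbf{\Theta}^{(l)}$ via Lemma~\ref{lemma:normineq} and insert the cross term $\tilde\GSO\X_p^{(l-1)}$ to split the remainder as $\tilde\GSO(\X^{(l-1)} - \X_p^{(l-1)}) + (\tilde\GSO - \tilde\GSO_p)\X_p^{(l-1)}$. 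The triangle inequality, the two forms of Lemma~\ref{lemma:normineq}, and the norm identities above then yield the recursion
\begin{equation*}
\norm{\X^{(l)} - \X_p^{(l)}}_F \leq \Big(\norm{\X^{(l-1)} - \X_p^{(l-1)}}_F + \norm{\E}_2\,\norm{\X_p^{(l-1)}}_F\Big)\norm{\mathbf{\Theta}^{(l)}}_2.
\end{equation*}
Substituting the auxiliary forward bound for $\norm{\X_p^{(l-1)}}_F$ and using the base case $\X^{(0)} = \X_p^{(0)}$ gives a clean linear recurrence whose solution, by a final induction, is $\norm{\X^{(l)} - \X_p^{(l)}}_F \leq l\sqrt{d}\,\norm{\E}_2\prod_{j=1}^{l}\norm{\mathbf{\Theta}^{(j)}}_2$; taking $l=L$ finishes the proof.

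The step I expect to require the most care is the bookkeeping in the recursion: the factor linear in $L$ arises precisely because at each layer the accumulated difference is multiplied by $\norm{\mathbf{\Theta}^{(l)}}_2$ while a fresh error term of size $\sqrt{d}\,\norm{\E}_2\prod_{j\leq l}\norm{\mathbf{\Theta}^{(j)}}_2$ is added, and checking that these collapse into exactly $l$ copies of the full product is where an off-by-one or a mismatched product index could creep in. The forward-growth bound on $\norm{\X_p^{(l-1)}}_F$ is the other nonroutine ingredient, since without it the per-layer error term would not be controlled.
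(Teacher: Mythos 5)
Your proof is correct and follows essentially the same route as the paper's: induction over layers, stripping the ReLU via its unit Lipschitz constant, peeling off $\mathbf{\Theta}^{(l)}$ with Lemma~\ref{lemma:normineq}, inserting a cross term to split the remainder into a propagated difference plus a fresh $\norm{\E}_2$ term, and controlling the latter with a forward bound $\norm{\X^{(l)}}_F \leq \sqrt{d}\prod_{j\leq l}\norm{\mathbf{\Theta}^{(j)}}_2$. The only cosmetic difference is that you insert the cross term $\tilde\GSO\X_p^{(l-1)}$ rather than the paper's $\tilde\GSO_p\X^{(l-1)}$ (so your forward bound is on the perturbed rather than the clean activations), and you are more explicit than the paper about needing the transposed form of Lemma~\ref{lemma:normineq} and $\norm{\tilde\GSO}_2=1$.
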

\begin{proof}
We prove this by induction. The base case $L=1$ follows immediately from Proposition \ref{prop:sgcnembed}. Consider a more general $L$. Then
\begin{align*}
    \norm{\X^{(L)} - \X_p^{(L)}}_F &= \norm{\sigma(\tilde \GSO \X^{(l-1)} \mathbf{\Theta}^{(l)}) - \sigma(\tilde \GSO_p \X^{(l-1)}_p \mathbf{\Theta}^{(l)})}_F \\
    &\leq \norm{\tilde \GSO \X^{(l-1)} \mathbf{\Theta}^{(l)} - \tilde \GSO_p \X^{(l-1)}_p \mathbf{\Theta}^{(l)}}_F \\
    &\leq \norm{\tilde \GSO \X^{(l-1)}  - \tilde \GSO_p \X^{(l-1)}_p }_F \norm{\mathbf{\Theta}^{(l)}}_2, 
\end{align*}
where the first inequality comes from ReLU having unit Lipschitz constant and the second being an application of Lemma \ref{lemma:normineq}. By using triangle inequality we bound the following term
\begin{align*}
    \norm{\tilde \GSO \X^{(l-1)}  - \tilde \GSO_p \X^{(l-1)}_p }_F   
    &= \norm{\tilde \GSO \X^{(l-1)}  - \tilde \GSO_p \X^{(l-1)} + \tilde \GSO_p \X^{(l-1)} - \tilde \GSO_p \X^{(l-1)}_p }_F  \\
    &\leq \norm{\tilde \GSO \X^{(l-1)}  - \tilde \GSO_p \X^{(l-1)}}_F + \norm{\tilde \GSO_p \X^{(l-1)} - \tilde \GSO_p \X^{(l-1)}_p }_F  \\
    &\leq \norm{\E}_2\norm{ \X^{(l-1)}}_F + \norm{\X^{(l-1)}  - \X_p^{(l-1)}}_F  .
\end{align*}
Note that $$\norm*{\X^{(l)}}_F = \norm*{\sigma(\tilde \GSO \X^{(l-1)} \mathbf{\Theta}^{(l)})}_F \leq \norm*{\X^{(l-1)}}_F\norm*{\mathbf{\Theta}^{(l)}}_2,$$ so by recursivity we get that $$\norm*{\X^{(l)}}_F \leq \norm*{\X^{(0)}}_F \norm*{\mathbf{\Theta}^{(1)}}_2 \ldots  \norm*{\mathbf{\Theta}^{(l)}}_2.$$ Recall that $\norm*{\X^{(0)}}_F = \sqrt{d}$. Using this observation and the inductive assumption we get that 
\begin{align*}
\left(\norm{\E}_2\norm{ \X^{(l-1)}}_F + \norm{\X^{(l-1)}  - \X_p^{(l-1)}}_F  \right) \norm{\mathbf{\Theta}^{(l)}}_2 
&\leq \sqrt{d} \norm{\E}_2 \prod_{l=1}^L \norm{\mathbf{\Theta}^{(l)}}_2 + \sqrt{d}(L-1)\norm{\E}_2 \prod_{l=1}^L \norm{\mathbf{\Theta}^{(l)}}_2\\
&=\sqrt{d}L\norm{\E}_2\prod_{l=1}^L \norm{\mathbf{\Theta}^{(l)}}_2. \qedhere
\end{align*}
\end{proof}
We finish this section by combining Proposition \ref{prop:gcnembed} with the results from Sec.~\ref{sec:rewiring} to give the following.
\begin{corr}
Consider the GCN outputs $\X^{(L)}$ and $\X_p^{(L)}$ for a graph $\G$ and a perturbed graph $\G_p$ where the perturbed graph is a result of double edge rewiring. Let $\tilde \GSO$ and $\tilde \GSO_p$ be the corresponding normalised augmented adjacency matrices. Define $d_u$, $\delta_u$ and $R_u$ as in Sec.~\ref{sec:rewiring}, then the following holds 
\begin{equation*}
    \norm{\X^{(L)} - \X_p^{(L)}}_F \leq \sqrt{d}L\prod_{l=1}^L\norm{\mathbf{\Theta}^{(l)}}_2 \max_{u \in \V}\frac{2R_u}{\sqrt{(d_u+1)(\delta_u+1)}}.
\end{equation*}
\end{corr}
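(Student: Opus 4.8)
The plan is to obtain this corollary as a direct composition of two results already established in the excerpt: the architecture-level bound of Proposition \ref{prop:gcnembed} and the structural bound on $\norm{\E}_2$ derived in Sec.~\ref{sec:rewiring}. First I would write down Proposition \ref{prop:gcnembed}, which already supplies
\begin{equation*}
    \norm{\X^{(L)} - \X_p^{(L)}}_F \leq \sqrt{d}L\norm{\E}_2\prod_{l=1}^L \norm{\mathbf{\Theta}^{(l)}}_2.
\end{equation*}
Every factor on the right except $\norm{\E}_2$ already appears in the target inequality, so the whole task collapses to replacing $\norm{\E}_2$ with the interpretable structural quantity $\max_{u \in \V} 2R_u / \sqrt{(d_u+1)(\delta_u+1)}$.

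Next I would invoke the chain of estimates from Sec.~\ref{sec:rewiring}. Since $\E = \tilde{\GSO}_p - \tilde{\GSO}$ is symmetric, the norm-equivalence argument gives $\norm{\E}_2 \leq \norm{\E}_1 = \max_{u \in \V}\norm{\E_u^\intercal}_1$, and for a double edge rewiring perturbation the closed form in Eq.~(\ref{eq:Eu}) was bounded to yield the per-node estimate $\norm{\E_u^\intercal}_1 \leq 2R_u / \sqrt{(d_u+\gamma)(\delta_u+\gamma)}$. Maximising over all nodes then produces
\begin{equation*}
\norm{\E}_2 \leq \max_{u \in \V} \frac{2R_u}{\sqrt{(d_u+\gamma)(\delta_u+\gamma)}}.
\end{equation*}
Substituting this into the displayed bound above and keeping the other factors intact gives the result.

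The only point requiring genuine care is the value of $\gamma$. The GCN model is defined with $\tilde{\GSO} \equaldef \GSO_1$, i.e., the normalised augmented adjacency matrix at $\gamma = 1$, so the error matrix $\E$ relevant here is the difference of augmented matrices at $\gamma = 1$; I must therefore specialise the structural bound to $\gamma = 1$ before combining, which is precisely what turns the generic $\sqrt{(d_u+\gamma)(\delta_u+\gamma)}$ denominators into the $\sqrt{(d_u+1)(\delta_u+1)}$ appearing in the corollary. Beyond this bookkeeping I do not anticipate any real obstacle, since the statement is a mechanical splicing of two previously proved inequalities; the main thing to verify is that the choice $\gamma = 1$ is carried through consistently so that the degree-shifted denominators match the corollary exactly.
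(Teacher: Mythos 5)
Your proposal is correct and matches the paper's own argument exactly: the corollary is obtained by substituting the Sec.~\ref{sec:rewiring} bound $\norm{\E}_2 \leq \max_{u \in \V} 2R_u/\sqrt{(d_u+\gamma)(\delta_u+\gamma)}$, specialised to $\gamma=1$ since $\tilde\GSO = \GSO_1$, into Proposition \ref{prop:gcnembed}. Your attention to carrying $\gamma=1$ through consistently is the right (and only) point of care.
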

A similar result holds for SGCN by combining Proposition \ref{prop:sgcnembed} with the results from Sec.~\ref{sec:rewiring}. We suspect these bounds will likely be loose in practice; nevertheless, they provide conceptional insights into the factors that may be related to the robustness of the GCN and SGCN models. In particular, we can reason when this bound will be small as in the final paragraph of Sec.~\ref{sec:rewiring}, which relates interpretable structural perturbation to the robustness of these models. 

\section{Discussion}
In this work, we bound the change in output of spectral graph filters under a specific form of topological perturbation, i.e., edge rewiring, where the bound involves terms which have a structural interpretation. 
We then demonstrate a practical application of this bound by applying it to the SGCN and GCN models. Future directions include proving that the change in output of other models scales proportional to $\norm*{\E}_2$ and providing further interpretable bounds to $\norm*{\E}_2$ to provide an understanding of stability for graph-based models. Furthermore, the SGCN and GCN models we consider both apply low-pass filters, and exploring models with other filtering characteristics is an interesting future direction. Finally, extensions of the framework presented in this work such as considering a more general perturbation model or extending it to weighted graphs will benefit its utility for practical applications.  
\bibliographystyle{IEEEbib}
\bibliography{bibliography}
\end{document}